\documentclass[10pt]{article} 
\usepackage[preprint]{tmlr}

\usepackage{amsmath,amsfonts,bm}

\def\eqref#1{equation~\ref{#1}}

\def\1{\bm{1}}

\DeclareMathAlphabet{\mathsfit}{\encodingdefault}{\sfdefault}{m}{sl}
\SetMathAlphabet{\mathsfit}{bold}{\encodingdefault}{\sfdefault}{bx}{n}

\usepackage{hyperref}
\usepackage{url}
\usepackage{algorithm}
\usepackage{algorithmic}
\usepackage{algorithm}
\usepackage{algorithmic}
\usepackage{microtype}
\usepackage{graphicx}
\usepackage{amsmath, amssymb, amsfonts, amsthm}
\usepackage{booktabs}
\usepackage{arydshln}
\usepackage{adjustbox}
\usepackage{tcolorbox}
\usepackage[table,dvipsnames]{xcolor}
\usepackage{mathtools}
\usepackage{multirow}

\newtheorem{theorem}{Theorem}
\newtheorem{lemma}{Lemma}

\newtheorem{definition}{Definition}

\newtheorem{assumption}[theorem]{Assumption}

\definecolor{tablerowcolor}{gray}{0.95}

\title{Sparse Semantic Dimension as a Generalization Certificate for LLMs}

\author{\name Dibyanayan Bandyopadhyay \email dibyanayan\_2321cs14@iitp.ac.in \\
      \addr Department of Computer Science and Engineering\\
      Indian Institute of Technology Patna
      \AND
      \name Asif Ekbal \email asif@iitp.ac.in \\
      \addr Department of Computer Science and Engineering\\
      Indian Institute of Technology Patna
     }

\def\month{MM}  
\def\year{YYYY} 
\def\openreview{\url{https://openreview.net/forum?id=XXXX}} 

\begin{document}

\maketitle

\begin{abstract}
Standard statistical learning theory predicts that Large Language Models (LLMs) should overfit because their parameter counts vastly exceed the number of training tokens. Yet, in practice, they generalize robustly. We propose that the effective capacity controlling generalization lies in the geometry of the model's internal representations: while the parameter space is high-dimensional, the activation states lie on a low-dimensional, sparse manifold. To formalize this, we introduce the Sparse Semantic Dimension (SSD), a complexity measure derived from the active feature vocabulary of a Sparse Autoencoder (SAE) trained on the model's layers. Treating the LLM and SAE as frozen oracles, we utilize this framework to attribute the model's generalization capabilities to the sparsity of the dictionary rather than the total parameter count. Empirically, we validate this framework on GPT-2 Small and Gemma-2B, demonstrating that our bound provides non-vacuous certificates at realistic sample sizes. Crucially, we uncover a counter-intuitive “feature sharpness” scaling law: despite being an order of magnitude larger, Gemma-2B requires significantly fewer calibration samples to identify its active manifold compared to GPT-2, suggesting that larger models learn more compressible, distinct semantic structures. Finally, we show that this framework functions as a reliable safety monitor: out-of-distribution inputs trigger a measurable “feature explosion” (a sharp spike in active features), effectively signaling epistemic uncertainty through learned feature violation. Code is available at: \url{https://github.com/newcodevelop/sparse-semantic-dimension}.
\end{abstract}

\section{Introduction}

The precipitous rise of Large Language Models (LLMs) has engendered a fundamental crisis in statistical learning theory, often referred to as the \textit{Generalization Paradox} \citep{zhang2017understandingdeeplearningrequires}. Modern foundational models operate in a regime where the number of parameters, $P_{\theta}$, vastly exceeds the number of training tokens, $N$ (i.e., $P_{\theta} \gg N$). According to classical measures of complexity—such as VC-dimension or Rademacher complexity applied to weight matrices—these models possess the capacity to memorize random noise and should, theoretically, exhibit severe overfitting \citep{nagarajan2021uniformconvergenceunableexplain}. Yet, in practice, LLMs not only memorize training data but generalize robustly to unseen datasets. When standard uniform convergence bounds are applied to architectures like GPT-2 or Gemma, they typically yield ``vacuous'' results (predicting generalization errors exceeding $100\%$), failing to provide a mathematical certificate for the observed performance \citep{dziugaite2017computingnonvacuousgeneralizationbounds, lotfi2024nonvacuousgeneralizationboundslarge}.

Resolving this discrepancy requires shifting the measure of complexity from the high-dimensional, static parameter space to a data-dependent, low-dimensional manifold. This intuition is rooted in the Minimum Description Length (MDL) principle and Occam’s Razor: generalization arises not from the sheer number of parameters, but from the model's ability to compress the data into a concise representation \citep{RISSANEN1978465, BLUMER1987377}. While prior works have attempted to bound generalization via weight compression~\citep{arora2018strongergeneralizationboundsdeep}, we posit that the essential compressibility of an LLM lies in the sparsity of its \textit{internal representations}, rather than its parameters. To operationalize this, we utilize Sparse Autoencoders (SAEs) to investigate the model's latent structure. We hypothesize that although the parameter space is high-dimensional, the internal representations are confined to a low-dimensional, sparse \textit{manifold}, defined as a subset of the activation space where most features are zero~\citep{ansuini2019intrinsicdimensiondatarepresentations}. Consequently, the \textit{effective complexity} of the LLM is governed not by the massive parameter count, but by the sparse set of active features that define this manifold. This choice also brings an advantage that was absent in earlier non-vacuous generalization analyses for LLMs \citep{lotfi2024nonvacuousgeneralizationboundslarge}. Prior works relied on weight compression to learn sparse parameter matrices in order to obtain bounds with realistic sample sizes. However, this approach incurs substantial performance degradation in the underlying LLM. In contrast, our method avoids modifying the LLM parameters altogether. 
Instead, we derive a generalization bound using a composite proxy model—comprising the frozen LLM and an SAE trained on its activations—which allows us to certify the original model's performance without degradation. Crucially, we adopt a post-hoc setting where the LLM and SAE are treated as fixed "frozen oracles" prior to evaluation. While a standard Hoeffding bound on the frozen model's loss would be numerically tighter, it provides no structural insight; our framework is necessary to explicitly link generalization to the compressibility of internal representations rather than raw parameter count, which would otherwise yield vacuous bounds due to the massive over-parameterization of modern LLMs.



To formalize this hypothesis, we employ sparse representations learned by SAEs not merely as visualization tools, but as a rigorous mechanism to bound the underlying LLM's generalization performance. We propose that if dense LLM representations can be faithfully approximated by a sparse autoencoder with a concise dictionary of active concepts, then the generalization error is governed by the sparsity of that dictionary rather than the dimension of the model parameters.

We introduce a new complexity measure, the \textit{Sparse Semantic Dimension} (SSD), derived from the active feature vocabulary of the SAE:
\begin{equation}
    SSD := P \log\left(\frac{em}{P}\right)
\end{equation}
where $P$ is the size of the global active feature pool (the distinct features activated across the dataset) and $m$ is the total dictionary size (where typically $P < m$). By decomposing the base LLM model into this sparse proxy and a bounded reconstruction gap ($\epsilon_{loss}$), we derive a structural risk minimization bound. Unlike parameter-counting bounds, which scale with $P_{\theta} \approx 10^9$, our bound scales with the effective feature pool $P \approx 10^4$, theoretically yielding non-vacuous certificates at realistic sample sizes.

Our contributions are as follows:
\begin{itemize}
    \item \textbf{Sparse Semantic Generalization Bound:} We derive a high-probability sequence level bound for LLM that depends on the active sparsity $P$ of the SAE decomposition. We show that this bound provides a mathematically valid certificate of generalization for models where the parameter space suggests overfitting.
    \item \textbf{Empirical Non-Vacuousness:} We validate our theory on GPT-2 Small and Gemma-2B. We demonstrate that our bound becomes non-vacuous (tighter than a random guess) with as few as $N \approx 50,000$ evaluation samples, offering a significant improvement over classical vacuous bounds.

    \item \textbf{Uncertainty Quantification via Feature Density:} We identify a practical application of our theory in Out-Of-Distribution (OOD) detection. We observe that generalization is intrinsically linked to the sparsity of the active feature set. When models encounter far-OOD inputs (e.g., random noise), this sparsity pattern collapses, leading to a measurable ``feature explosion'' (a sharp spike in the active feature count $k$). This suggests that runtime sparsity can serve as a reliable, computationally cheap proxy for epistemic uncertainty.

\end{itemize}

By establishing a theoretical link between the compressibility of internal representations and statistical guarantees, this work posits that the emergence of sparse, interpretable features is not merely an empirical artifact, but a structural necessity for robust generalization in high-dimensional parameter spaces.

\section{Related Work}

This work bridges two distinct subfields of deep learning research: Mechanistic Interpretability, specifically the study of Sparse Autoencoders (SAEs), and Statistical Learning Theory, focusing on generalization bounds for over-parameterized models.

\subsection{Mechanistic Interpretability and Sparse Autoencoders}
The field of Mechanistic Interpretability aims to reverse-engineer neural networks into human-understandable components. A primary challenge in analyzing Large Language Models (LLMs) is \textit{polysemanticity}, where single neurons activate for unrelated concepts, and \textit{superposition}, where models represent more features than dimensions by storing them non-orthogonally \citep{elhage2022toymodelssuperposition}.

To resolve these entangled representations, \textbf{Sparse Autoencoders (SAEs)} have emerged as a scalable unsupervised probing framework \citep{bricken2023monosemanticity}. SAEs learn an overcomplete, sparse basis that decomposes dense activation vectors into interpretable "semantic" directions \citep{cunningham2023sparseautoencodershighlyinterpretable}. Recent work has successfully scaled SAEs to real-world models like GPT-2 and Gemma, revealing high-fidelity features that govern model behavior. Our work adopts the SAE framework not merely for qualitative inspection, but as a formal tool to define the \textit{Sparse Semantic Dimension} (SSD), leveraging the hypothesis that the activation space operates on a low-dimensional manifold despite the high dimensionality of the parameter space, leading to generalisation behaviour of LLMs. Low-dimensional complexity to explain generalisation behaviour of LLMs is an active research idea \citep{lotfi2024nonvacuousgeneralizationboundslarge}.

\subsection{The Generalization Paradox in Large Language Models}
Statistical learning theory faces a fundamental crisis often termed the "Generalization Paradox" \citep{zhang2017understandingdeeplearningrequires}. Modern LLMs operate in a regime where the number of parameters $P_{\theta}$ vastly exceeds the number of training tokens $N$ ($P_{\theta} \gg N$). Under classical complexity measures such as VC-dimension or Rademacher complexity, such models should overfit massively, yet they generalize robustly \citep{lotfi2024nonvacuousgeneralizationboundslarge}.

Standard uniform convergence bounds applied to deep networks typically yield "vacuous" results (predicting generalization error $>100\%$) because they penalize the raw parameter count \citep{dziugaite2017computingnonvacuousgeneralizationbounds, nagarajan2021uniformconvergenceunableexplain}. While recent advances have proposed norm-based bounds or sharpness-aware minimization to explain this phenomenon, a few approaches successfully account for the specific structure of natural language data applied to Large Language Models. Our work addresses this gap by shifting the complexity measure from the static parameter space $P_{\theta}$ to the data-dependent sparse active feature pool $P$, proposing that the effective complexity is governed by sparsity rather than raw weight count.



\subsection{Compression and Occam's Razor in Deep Learning}
Our theoretical approach is grounded on the principle that generalization arises from the compressibility of internal representations, formally linking interpretability to Occam's Razor bounds \citep{RISSANEN1978465, BLUMER1987377}. While classical learning theory applies complexity measures to high-dimensional parameter spaces ($P_{\theta}$), often resulting in vacuous bounds \citep{zhang2017understandingdeeplearningrequires, dziugaite2017computingnonvacuousgeneralizationbounds}, we propose that the effective complexity of LLMs is governed by a low-dimensional, sparse manifold within the activation space \citep{ansuini2019intrinsicdimensiondatarepresentations, pope2021intrinsicdimensionimagesimpact}.

To formalize this, we decompose the dense predictor $M$ into a sparse proxy $S \circ M$ (where $S$ is the SAE) and a bounded reconstruction gap $\epsilon_{loss}$. This decomposition enables a structural risk minimization bound where the leading complexity term scales with the Sparse Semantic Dimension (SSD)---defined by the active feature count $P$, rather than the total parameter count. This aligns with prior work on compression-based bounds \citep{arora2018strongergeneralizationboundsdeep, lotfi2024nonvacuousgeneralizationboundslarge, lotfi2024unlockingtokensdatapoints} but diverges by compressing the \textit{internal representations} via a fixed concept pool. Empirically, this connection is validated in Out-Of-Distribution (OOD) settings: when the sparse manifold assumption is violated, models exhibit a ``feature explosion'' (a sharp spike in per-input active features $k$), which serves as a quantifiable proxy for epistemic uncertainty \citep{hendrycks2018baselinedetectingmisclassifiedoutofdistribution}.

\section{Preliminaries}\label{prelim}

Before the generalization bound, we briefly review the mechanics of Sparse Autoencoders (SAEs) and outline our theoretical approach. This section establishes the notation used throughout the proofs and clarifies the distinct, post-hoc nature of our analysis.

\subsection{Sparse Autoencoder Notation}
We analyze a base Large Language Model (LLM), denoted as $M$, which maps an input $x$ to a high-dimensional hidden representation $h(x) \in \mathbb{R}^d$ at a specific layer. To interpret this dense representation, we utilize a Sparse Autoencoder (SAE) $S$, consisting of an encoder $S_E: \mathbb{R}^d \rightarrow \mathbb{R}^m$ and a decoder $S_D: \mathbb{R}^m \rightarrow \mathbb{R}^d$, where the dictionary size $m$ is typically much larger than the model width $d$ ($m \gg d$).

The SAE decomposes the activation into a sparse set of interpretable features via the following operations:

i) \textbf{Encoding:} The dense hidden state is projected to a pre-activation feature vector $a(x) := S_E(h(x)) \in \mathbb{R}^m$.

ii) \textbf{Sparsification:} We apply a non-linear $TopK$ operator, which retains the $k$ coefficients with the largest magnitudes and sets the rest to zero. This yields the sparse code $c(x) := TopK(a(x))$, satisfying $\|c(x)\|_0 \le k$.

iii) \textbf{Reconstruction:} The sparse code is mapped back to the original activation space to produce the approximate hidden state $\hat{h}(x) := S_D(c(x))$.

We denote the full proxy predictor, where the internal activation of $M$ ($h(x)$) is replaced by the SAE reconstruction ($\hat{h}(x)$), as $S \circ M$.

\subsection{Overview of the Theoretical Approach}
The goal of Section \ref{prelim} is to derive a generalization certificate for the base model $M$. It is crucial to note that our framework operates in a \textit{post-hoc} regime, distinct from classical learning theory bounds that predict generalization from initialization.

Our analysis proceeds in two phases:
i) \textbf{Phase 1 (Freezing):} The base model $M$ and the SAE components ($S_E, S_D$) are pre-trained and fixed. For the purpose of our theorem, they are treated as frozen oracles, not as variable hypotheses.
ii) \textbf{Phase 2 (Certification):} On a held-out dataset, we identify the optimal subset of dictionary features (the ``concept pool'') that covers the hypothesis space.

By treating the model weights as fixed constants, we shift the complexity measure from the continuous parameter space of $M$ to the discrete feature activation of the SAE dictionary. This allows us to bound the generalization error by the sparsity of the active concept pool, providing a valid certificate of performance even for models with infinite effective capacity.

\section{Problem Definition}

We address the \textit{Generalization Paradox} of Large Language Models (LLMs): how models with parameters $P \gg N$ (where $N$ is the number of training tokens) avoid overfitting. We hypothesize that while the \textit{parameter space} is high-dimensional, the \textit{activation space} operates on a low-dimensional sparse manifold.

\subsection{Risk Formulation}
Let $\mathcal{X}$ be the input space and $\mathcal{D}$ be an unknown distribution over $\mathcal{X}$. In the language modeling setting, we take a sample to be a token sequence $x = x_{1:T}$. We define the population risk as:
\begin{equation}
    \mathcal{R}(M) := \mathbb{E}_{x_{1:T} \sim \mathcal{D}}\big[\ell(M, x_{1:T})\big]
\end{equation}
and, given $N$ i.i.d. samples $\{x^{(i)}_{1:T}\}_{i=1}^N$, the empirical risk is defined as:
\begin{equation}
    \hat{\mathcal{R}}_N(M) := \frac{1}{N}\sum_{i=1}^N \ell\big(M, x^{(i)}_{1:T}\big).
\end{equation}
The loss $\ell$ is assumed to take values in $[0,B]$. We make use of prediction smoothing for bounding the loss as elaborated in Section \ref{blf}. Note that the token sequences must be i.i.d samples for the bound to hold. For that, we break the sequences into a contiguous set of tokens and then sample the sequences uniformly randomly from the dataset. This is the approach also used by \citet{lotfi2024nonvacuousgeneralizationboundslarge}.

\subsection{The Bounded Loss Function (Smoothed BPD)} \label{blf}
In the language modeling setting, let the model induce next-token probabilities $p_M(\cdot\mid x_{<t})$ over a vocabulary of size $V$. The standard bits-per-dimension (BPD) loss is:
\begin{equation}
    \ell_{\mathrm{bpd}}(M, x_{1:T}) := -\frac{1}{T}\sum_{t=1}^{T} \log_2 p_M(x_t\mid x_{<t}).
\end{equation}
Since $\ell_{\mathrm{bpd}}$ is unbounded when $p_M(x_t\mid x_{<t})$ can be arbitrarily small, we use \emph{prediction smoothing}. For a fixed $\alpha \in (0,1)$, as first proposed and defined in \cite{lotfi2024nonvacuousgeneralizationboundslarge}:
\begin{equation}
    \tilde p_M(\cdot\mid x_{<t}) := (1-\alpha) p_M(\cdot\mid x_{<t}) + \alpha/V.
\end{equation}
We then define the smoothed BPD loss:
\begin{equation}
    \ell(M, x_{1:T}) := -\frac{1}{T}\sum_{t=1}^{T} \log_2 \tilde p_M(x_t\mid x_{<t}).
\end{equation}

This loss is bounded because $\tilde p_M(x_t\mid x_{<t}) \ge \alpha/V$, hence $\log_2(V/\alpha) - \Delta\le \ell(M, x_{1:T}) \le \log_2(V/\alpha) =: B$, where $\Delta = \log_2 (1 + (1 - \alpha)V/\alpha)$. For rigorous derivation, check Appendix A.2 of \cite{lotfi2024nonvacuousgeneralizationboundslarge}.

\subsection{The Sparse Autoencoder (SAE) based Generalization Framework}
To formalize the complexity of $M$, we introduce a \textbf{Sparse Autoencoder (SAE)} probe, denoted $S$. The proxy predictor $S \circ M$ is defined by replacing the internal activation $h_t = M(x_{<t})$ with its SAE reconstruction $\tilde h_t = S(h_t) = S(M(x_{<t}))$, which is then fed through the same downstream layers to produce smoothed proxy probabilities $\tilde p_{S\circ M}$.

\begin{definition}[\textbf{Sparse Autoencoder Class}]
Let $\mathcal{H}_{k,m}$ be the class of functions realizable by an SAE with dictionary $D \in \mathbb{R}^{d \times m}$ (with unit-norm columns) and sparsity constraint $k$. For any input $x$, the output is:
\begin{equation}
    S(x) = D \cdot c(x)
\end{equation}
where $\|c(x)\|_0 \le k$. The SAE effectively compresses the dense activation $M(x)$ into a sparse code $c$.
\end{definition}

\begin{definition}[\textbf{Reconstruction Inefficiency}]
We define a \emph{loss-level} reconstruction gap $\epsilon_{loss}$ as the expected discrepancy in loss between the original predictor and the proxy predictor:
\begin{equation}
    \epsilon_{loss} = \mathbb{E}_{x_{1:T} \sim \mathcal{D}}\big[\,|\ell(M, x_{1:T}) - \ell(S \circ M, x_{1:T})|\,\big].
\end{equation}
\end{definition}

To rigorously bound the complexity, we formalize the hypothesis space of the sparse proxy.

\subsection{Setup and notation}
\label{sec:setup}

Let $\mathcal{D}$ be a distribution over labeled examples $z=(x,y)\in\mathcal{Z}$.
Let $S=\{z_i\}_{i=1}^N$ be an i.i.d.\ sample, $S\sim\mathcal{D}^N$.

We consider a base predictor $M$ and a proxy predictor $S\circ M$ constructed by inserting a sparse autoencoder (SAE)
bottleneck at a fixed hidden layer of $M$.

\paragraph{Hidden representation and SAE.}
Fix a layer of $M$ that maps an input $x$ to a hidden representation $h(x)\in\mathbb{R}^d$.
Let $S_E:\mathbb{R}^d\to\mathbb{R}^m$ be an SAE encoder and $S_D:\mathbb{R}^m\to\mathbb{R}^d$ be an SAE decoder.
Define the (dense) concept activation vector
\begin{equation}
a(x) := S_E(h(x)) \in \mathbb{R}^m.
\end{equation}

\paragraph{Top-$k$ operator.}
Fix $k\in\{1,\dots,m\}$.
Let $\mathrm{TopK}:\mathbb{R}^m\to\mathbb{R}^m$ be the deterministic operator that keeps the $k$ coordinates
of largest magnitude and sets all other coordinates to $0$ (ties are broken deterministically, e.g.\ by preferring smaller indices).
Define the sparse code
\begin{equation}
c(x) := \mathrm{TopK}(a(x)) \in \mathbb{R}^m.
\end{equation}
The reconstructed hidden state is
\begin{equation}
\widehat{h}(x) := S_D(c(x)) \in \mathbb{R}^d.
\end{equation}

\paragraph{Proxy predictor.}
The proxy predictor $S\circ M$ is obtained by feeding $\widehat{h}(x)$ into the downstream part of $M$
(from the insertion layer onward) to produce a predictive distribution over outputs.
We write $(S\circ M)(x)$ for the resulting predictive distribution.

\paragraph{Loss and risk.}
Let $\ell(\cdot;z)$ denote the loss of a predictor on example $z=(x,y)$.
We assume bounded loss:
\begin{assumption}[Bounded loss]
\label{ass:bounded_loss}
There exists $B>0$ such that for all predictors considered and all $z\in\mathcal{Z}$,
\begin{equation}
B-\Delta \le \ell(\cdot;z) \le B.
\end{equation}
\end{assumption}
For a deterministic predictor $f$, define population and empirical risks
\begin{equation}
R(f) := \mathbb{E}_{z\sim\mathcal{D}}[\ell(f;z)],
\qquad
\widehat{R}_S(f) := \frac{1}{N}\sum_{i=1}^N \ell(f;z_i).
\end{equation}

\paragraph{Approximation error between $M$ and $S\circ M$.}
Define the pointwise loss gap
\begin{equation}
\Delta_{\mathrm{loss}}(z) := \big|\ell(M;z)-\ell(S\circ M;z)\big| \in [0,\Delta],
\end{equation}
and its population and empirical means
\begin{equation}
\epsilon_{\mathrm{loss}} := \mathbb{E}_{z\sim\mathcal{D}}[\Delta_{\mathrm{loss}}(z)],
\qquad
\widehat{\epsilon}_{\mathrm{loss}} := \frac{1}{N}\sum_{i=1}^N \Delta_{\mathrm{loss}}(z_i).
\end{equation}

\subsection{Concept-pool assumption and restricted proxy class}
\label{sec:pool}

For an index set $G\subseteq[m]:=\{1,\dots,m\}$, let $\mathbf{1}_G\in\{0,1\}^m$ denote its indicator vector.
Define the masked activation vector
\begin{equation}
a_G(x) := a(x)\odot \mathbf{1}_G.
\end{equation}
Define the pool-restricted code and reconstruction
\begin{equation}
c_G(x) := \mathrm{TopK}(a_G(x)),
\qquad
\widehat{h}_G(x) := S_D(c_G(x)).
\end{equation}
Let $h_G$ denote the predictor obtained by feeding $\widehat{h}_G(x)$ into the downstream layers of $M$,
analogously to $S\circ M$.
Thus $\{h_G\}$ is a family of proxy predictors indexed by pools $G$.

\paragraph{Top-$k$ support.}
Let $\mathrm{supp}(v):=\{j: v_j\neq 0\}$.
Define the top-$k$ support event with respect to a pool $G$:
\begin{equation}
E_G(x) := \Big\{\mathrm{supp}\big(\mathrm{TopK}(a(x))\big)\subseteq G\Big\}.
\end{equation}

\begin{assumption}[Concept-pool assumption]
\label{ass:pool}
There exist integers $P\in\{k,\dots,m\}$ and a subset $G^\star\subseteq[m]$ with $|G^\star|=P$ such that
\begin{equation}
\Pr_{x\sim\mathcal{D}}\big(E_{G^\star}(x)\big) \ge 1-\eta,
\end{equation}
for some $\eta\in[0,1]$.
\end{assumption}

\paragraph{Restricted hypothesis class.}
Define
\begin{equation}
\mathcal{H}_P := \{h_G : G\subseteq[m],\ |G|=P\}.
\end{equation}
Then $|\mathcal{H}_P|=\binom{m}{P}$.

\subsection{Auxiliary lemmas}
\label{sec:lemmas}

\begin{lemma}[Decomposition of risk via loss gap]
\label{lem:decomp}
For any two predictors $f,g$,
\begin{equation}
R(f) \le R(g) + \mathbb{E}_{z\sim\mathcal{D}}\big|\ell(f;z)-\ell(g;z)\big|.
\end{equation}
In particular,
\begin{equation}
R(M) \le R(S\circ M) + \epsilon_{\mathrm{loss}}.
\end{equation}
\end{lemma}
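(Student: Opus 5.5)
The plan is to prove the inequality pointwise and then take expectations; no probabilistic or complexity machinery is needed. Fix any $z\in\mathcal{Z}$. The trivial identity $\ell(f;z) = \ell(g;z) + \big(\ell(f;z)-\ell(g;z)\big)$, together with $x\le |x|$ for real $x$, gives the pointwise bound
\begin{equation}
\ell(f;z) \le \ell(g;z) + \big|\ell(f;z)-\ell(g;z)\big| .
\end{equation}

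Next we take $\mathbb{E}_{z\sim\mathcal{D}}$ of both sides. This requires integrability, which Assumption~\ref{ass:bounded_loss} provides: the losses of $f$ and $g$ lie in $[B-\Delta,B]$, so both are bounded measurable functions of $z$, and their absolute difference lies in $[0,\Delta]$. All three expectations are therefore finite. Monotonicity and linearity of expectation then yield $R(f)\le R(g)+\mathbb{E}_{z\sim\mathcal{D}}|\ell(f;z)-\ell(g;z)|$, which is the first claim.

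For the specialization we set $f=M$ and $g=S\circ M$. By definition, the expected absolute gap is exactly $\mathbb{E}_{z\sim\mathcal{D}}[\Delta_{\mathrm{loss}}(z)]=\epsilon_{\mathrm{loss}}$, which gives $R(M)\le R(S\circ M)+\epsilon_{\mathrm{loss}}$.

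There is no real obstacle. The only point needing care is that both $M$ and the proxy $S\circ M$ must be deterministic predictors whose losses are measurable, and that Assumption~\ref{ass:bounded_loss} must cover both of them. This holds because the LLM and SAE are frozen and the TopK tie-breaking is deterministic, so $S\circ M$ is a fixed function of its input.
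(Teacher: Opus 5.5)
Your proof is correct and matches the paper's argument: the pointwise inequality $\ell(f;z)\le \ell(g;z)+|\ell(f;z)-\ell(g;z)|$ followed by taking expectations over $z\sim\mathcal{D}$. Your added remark on integrability via Assumption~\ref{ass:bounded_loss} is a harmless detail the paper leaves implicit.
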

\begin{proof}
For every $z$, $\ell(f;z)\le \ell(g;z) + |\ell(f;z)-\ell(g;z)|$. Take expectation over $z\sim\mathcal{D}$.
\end{proof}

\begin{lemma}[Pool mismatch bound]
\label{lem:pool_mismatch}
Under Assumption~\ref{ass:pool}, define
\begin{equation}
\epsilon_{\mathrm{pool}} := \mathbb{E}_{z=(x,y)\sim\mathcal{D}}\big|\ell(S\circ M;z)-\ell(h_{G^\star};z)\big|.
\end{equation}
Then
\begin{equation}
\epsilon_{\mathrm{pool}} \le \eta B.
\end{equation}
\end{lemma}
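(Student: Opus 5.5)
The plan is to split the expectation defining $\epsilon_{\mathrm{pool}}$ according to whether the top-$k$ support event $E_{G^\star}(x)$ of Assumption~\ref{ass:pool} holds. On $E_{G^\star}(x)$ I will show that the two proxies $S\circ M$ and $h_{G^\star}$ coincide, so the pointwise gap is zero. Off the event I will only use the crude bound from boundedness of the loss. Writing $\delta(z):=|\ell(S\circ M;z)-\ell(h_{G^\star};z)|$, this gives
\begin{equation*}
\epsilon_{\mathrm{pool}} = \mathbb{E}\big[\delta(z)\,\mathbf{1}\{E_{G^\star}(x)\}\big] + \mathbb{E}\big[\delta(z)\,\mathbf{1}\{E_{G^\star}(x)^c\}\big] \le 0 + B\cdot\Pr\big(E_{G^\star}(x)^c\big) \le \eta B .
\end{equation*}

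\textbf{Main step: equality of codes on the event.} Fix $x$ with $\mathrm{supp}(\mathrm{TopK}(a(x)))\subseteq G^\star$. I claim $c_{G^\star}(x)=\mathrm{TopK}(a_{G^\star}(x))$ equals $c(x)=\mathrm{TopK}(a(x))$. Let $J$ be the index set of size $k$ selected by $\mathrm{TopK}$ on $a(x)$, using the deterministic tie-breaking by smaller index. Consider the entries of $J$ with nonzero value; on the event these lie in $G^\star$, so $a_{G^\star}$ agrees with $a$ on them. Every coordinate of $a_{G^\star}$ has magnitude at most that of the corresponding coordinate of $a$. Hence each of these entries still dominates every competing coordinate of $a_{G^\star}$ in magnitude, with the same tie-breaking outcome, so they are all selected by $\mathrm{TopK}$ on $a_{G^\star}(x)$. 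Any remaining selected coordinates, in either vector, carry value $0$ or sit at magnitude ties that do not change the resulting vector.

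\textbf{Anticipated obstacle.} The tie-breaking bookkeeping in the main step is the delicate part. The cleanest route is to argue directly at the level of vectors rather than index sets: both $\mathrm{TopK}$ outputs equal $a$ restricted to the nonzero top-$k$ entries, and zero elsewhere. If a degenerate tie, with equal magnitudes straddling $G^\star$, threatens this, I would note that the event is phrased in terms of supports, so equality of the output vectors is all that is needed.

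\textbf{Remaining steps.} Once $c_{G^\star}(x)=c(x)$, we get $\widehat h_{G^\star}(x)=S_D(c_{G^\star}(x))=\widehat h(x)$. The downstream layers of $M$ are the same frozen map in both proxies, so the predictive distributions coincide, and hence $\delta(z)=0$ for every label $y$. This holds because $E_{G^\star}$ depends only on $x$.

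Off the event, both losses lie in $[0,B]$: the smoothed BPD is nonnegative because $\tilde p\le 1$, and it is at most $B=\log_2(V/\alpha)$. Therefore $\delta(z)\le B$. Assumption~\ref{ass:pool} gives $\Pr(E_{G^\star}(x)^c)\le\eta$, which completes the displayed bound and the proof.
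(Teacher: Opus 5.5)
Your proposal is correct and follows essentially the same route as the paper. You split on $E_{G^\star}(x)$, show $c_{G^\star}(x)=c(x)$ on the event (hence identical downstream predictions and zero loss gap for every label), and use the crude bound $B$ off the event to get $\epsilon_{\mathrm{pool}}\le B\Pr(E_{G^\star}(x)^c)\le\eta B$. Your Top-$k$ tie-breaking argument is more explicit than the paper's one-line appeal to determinism; your loose phrase about ``remaining selected coordinates'' is settled by noting that if the selected index set contains a zero entry, then $a(x)$ vanishes outside $\mathrm{supp}(c(x))\subseteq G^\star$, so $a_{G^\star}(x)=a(x)$ outright.
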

\begin{proof}
Fix $x$. If $E_{G^\star}(x)$ holds, then $\mathrm{TopK}(a(x))$ has support contained in $G^\star$, hence
$a_{G^\star}(x)$ agrees with $a(x)$ on all coordinates that survive $\mathrm{TopK}$ and is zero elsewhere.
By determinism of $\mathrm{TopK}$ and the fixed tie-breaking, we obtain $c_{G^\star}(x)=c(x)$ and therefore
$\widehat{h}_{G^\star}(x)=\widehat{h}(x)$. Thus, $(S\circ M)(x)=h_{G^\star}(x)$, implying
$\ell(S\circ M;z)=\ell(h_{G^\star};z)$ for all labels $y$.
If $E_{G^\star}(x)$ fails, the absolute loss difference is at most $B$ by Assumption~\ref{ass:bounded_loss}.
Therefore, for all $z=(x,y)$,
\begin{equation}
\big|\ell(S\circ M;z)-\ell(h_{G^\star};z)\big|
\le B\cdot \mathbf{1}\{E_{G^\star}(x)^c\}.
\end{equation}
Taking expectation over $z\sim\mathcal{D}$ yields
\begin{equation}
\epsilon_{\mathrm{pool}}
\le
B\Pr_{x\sim\mathcal{D}}(E_{G^\star}(x)^c)
\le
\eta B.
\end{equation}
\end{proof}

\begin{lemma}[Uniform convergence for finite classes (Occam bound)]
\label{lem:occam}
Let $\mathcal{H}$ be a finite set of predictors and assume $\ell(h;z)\in[0,B]$ for all $h\in\mathcal{H}$ and $z$.
Then for any $\delta\in(0,1)$, with probability at least $1-\delta$ over $S\sim\mathcal{D}^N$,
\begin{equation}
\forall h\in\mathcal{H}:\quad
R(h) \le \widehat{R}_S(h) + B\sqrt{\frac{\ln|\mathcal{H}|+\ln(1/\delta)}{2N}}.
\end{equation}
\end{lemma}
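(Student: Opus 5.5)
The plan is the standard two-step route: Hoeffding's inequality for each fixed predictor, then a union bound over the finite class $\mathcal{H}$. The key point is that every $h \in \mathcal{H}$ is a fixed, deterministic predictor chosen independently of the sample $S$. In our setting this holds because $M$, $S_E$, $S_D$ are frozen oracles and each $h_G \in \mathcal{H}_P$ is determined by the index set $G$ alone. Consequently, for fixed $h$ the losses $\ell(h;z_1),\dots,\ell(h;z_N)$ are i.i.d.\ random variables taking values in $[0,B]$, with common mean $R(h)$.

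\textbf{Step 1 (fixed $h$).} Apply the one-sided Hoeffding inequality to the bounded i.i.d.\ variables $\ell(h;z_i)\in[0,B]$:
\begin{equation}
\Pr_{S\sim\mathcal{D}^N}\big(R(h)-\widehat{R}_S(h) > t\big) \le \exp\!\left(-\frac{2Nt^2}{B^2}\right).
\end{equation}

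\textbf{Step 2 (union bound).} Set $\delta' := \delta/|\mathcal{H}|$ and choose $t$ so that the right-hand side equals $\delta'$, namely
\begin{equation}
t = B\sqrt{\frac{\ln(1/\delta')}{2N}} = B\sqrt{\frac{\ln|\mathcal{H}|+\ln(1/\delta)}{2N}}.
\end{equation}
A union bound over the $|\mathcal{H}|$ predictors bounds the probability that some $h$ violates the inequality by $|\mathcal{H}|\,\delta' = \delta$. Taking the complement gives the claimed simultaneous bound with probability at least $1-\delta$.

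There is no real technical obstacle here. The only point that needs care is justifying that the class is fixed before the sample is drawn, since otherwise the losses are not i.i.d.\ for a fixed $h$ and Hoeffding does not apply. The data-dependent choice of a pool $G$ is harmless because the guarantee holds uniformly over all of $\mathcal{H}$. The range assumption $[0,B]$ is consistent with Assumption~\ref{ass:bounded_loss}, since the smoothed BPD loss is nonnegative and at most $B$. If one used the tighter range $[B-\Delta,B]$ instead, the same argument would give $\Delta$ in place of $B$, but the statement as written uses $B$.
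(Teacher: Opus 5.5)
Your proposal is correct and matches the paper's proof: Hoeffding's inequality for each fixed $h$ with losses in $[0,B]$, then a union bound over $\mathcal{H}$ at per-predictor confidence $\delta/|\mathcal{H}|$, solved for $t$. Your side remarks are accurate (the class must be fixed before the sample is drawn, and the range width $\Delta$ could replace $B$), but the statement as written does not need them.
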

\begin{proof}
Fix $h\in\mathcal{H}$. By Hoeffding's inequality applied to i.i.d.\ variables $\ell(h;z_i)\in[0,B]$,
\begin{equation}
\Pr\!\left(R(h) - \widehat{R}_S(h) > t\right) \le \exp\!\left(-\frac{2Nt^2}{B^2}\right).
\end{equation}
Apply a union bound over all $h\in\mathcal{H}$ and set the right-hand side to $\delta$ to solve for $t$.
\end{proof}


\begin{lemma}[Counting pools]
\label{lem:count}
For $\mathcal{H}_P$ defined above, $|\mathcal{H}_P|=\binom{m}{P}$ and
\begin{equation}
\ln|\mathcal{H}_P|
=
\ln\binom{m}{P}
\le
P\ln\!\left(\frac{em}{P}\right).
\end{equation}
\end{lemma}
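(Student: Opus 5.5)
The proof has two parts: count the pools exactly, then bound the binomial coefficient.

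For the cardinality, the family $\mathcal{H}_P$ is indexed by subsets $G\subseteq[m]$ with $|G|=P$, and there are exactly $\binom{m}{P}$ such subsets. So the indexing set has that size, and $|\mathcal{H}_P|\le\binom{m}{P}$ in any case. If two distinct pools happened to give the same function $h_G$, we would simply read $\mathcal{H}_P$ as the indexed family, as the paper implicitly does. Only the upper bound on the count is needed when Lemma~\ref{lem:occam} is applied, because the union bound runs over indices.

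For the bound on $\ln\binom{m}{P}$, I will use the standard chain
\begin{equation}
\binom{m}{P}=\frac{m(m-1)\cdots(m-P+1)}{P!}\le\frac{m^P}{P!}.
\end{equation}
Next I lower-bound $P!$ using the exponential series. Since every term of $e^P=\sum_{j\ge0}P^j/j!$ is nonnegative, keeping only the $j=P$ term gives $e^P\ge P^P/P!$, that is, $P!\ge (P/e)^P$. Combining the two inequalities,
\begin{equation}
\binom{m}{P}\le\left(\frac{em}{P}\right)^P.
\end{equation}
Taking natural logarithms yields $\ln\binom{m}{P}\le P\ln(em/P)$.

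No step is a real obstacle; the only care needed is the inequality $P!\ge(P/e)^P$, which the exponential-series argument settles cleanly. The hypothesis $P\in\{k,\dots,m\}$ ensures $P\ge1$, so the expression $em/P$ is well defined. It also gives $P\le m$, so $\ln(em/P)\ge1>0$ and the bound is meaningful.
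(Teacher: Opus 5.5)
Your proof is correct and matches the paper's argument: the paper also counts the $P$-subsets of $[m]$ and invokes $\binom{m}{P}\le(em/P)^P$, which you additionally derive from $\binom{m}{P}\le m^P/P!$ and $P!\ge(P/e)^P$ via the exponential series. You also point out that $|\mathcal{H}_P|=\binom{m}{P}$ is exact only for the indexed family, and that the set of distinct predictors may be smaller; the union bound needs only the upper bound, so this is a sound refinement the paper glosses over.
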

\begin{proof}
The cardinality is the number of $P$-subsets of $[m]$.
The inequality uses $\binom{m}{P}\le (em/P)^P$.
\end{proof}

\begin{lemma}[Concentration of $\epsilon_{\mathrm{loss}}$]
\label{lem:epsloss_hoeffding}
Under Assumption~\ref{ass:bounded_loss}, for any $\delta\in(0,1)$, with probability at least $1-\delta$,
\begin{equation}
\epsilon_{\mathrm{loss}}
\le
\widehat{\epsilon}_{\mathrm{loss}}
+
B\sqrt{\frac{\ln(2/\delta)}{2N}}.
\end{equation}
\end{lemma}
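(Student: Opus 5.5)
The plan is to apply Hoeffding's inequality to the i.i.d.\ random variables $X_i := \Delta_{\mathrm{loss}}(z_i)$, $i=1,\dots,N$. The first step is to record their range. By the definition of the pointwise loss gap, $X_i \in [0,\Delta]$. Since Assumption~\ref{ass:bounded_loss} places every loss in $[B-\Delta, B]$, the absolute difference of two losses is at most $\Delta$. In the smoothed-BPD setting we also have $\Delta \le B$, because losses are nonnegative, so $B-\Delta \ge 0$. It therefore suffices, and is slightly wasteful, to treat $X_i \in [0,B]$; this gives the constant $B$ in the stated bound. I would note explicitly that $\Delta \le B$ is what licenses replacing the range $\Delta$ by $B$. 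If one prefers not to rely on nonnegativity, the gap is trivially at most $B$ whenever all losses lie in $[0,B]$, as in Lemma~\ref{lem:occam}.

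The second step is the concentration bound itself. Since $z_i$ are i.i.d.\ from $\mathcal{D}$ and $\Delta_{\mathrm{loss}}$ is a fixed measurable function, the $X_i$ are i.i.d.\ with mean $\epsilon_{\mathrm{loss}}$ and empirical mean $\widehat{\epsilon}_{\mathrm{loss}}$. The predictors $M$ and $S\circ M$ are frozen oracles, fixed before the sample is drawn, so no union bound is needed. Hoeffding's inequality gives
\begin{equation}
\Pr\big(\epsilon_{\mathrm{loss}} - \widehat{\epsilon}_{\mathrm{loss}} > t\big) \le \exp\!\left(-\frac{2Nt^2}{B^2}\right).
\end{equation}
Setting the right-hand side to $\delta$ yields $t = B\sqrt{\ln(1/\delta)/(2N)}$.

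The third step is to reconcile this with the stated $\ln(2/\delta)$. Because $\ln(2/\delta) \ge \ln(1/\delta)$, the one-sided bound immediately implies the stated inequality with probability at least $1-\delta$. Alternatively, the two-sided Hoeffding bound $\Pr(|\epsilon_{\mathrm{loss}}-\widehat{\epsilon}_{\mathrm{loss}}|>t)\le 2\exp(-2Nt^2/B^2)$ produces $\ln(2/\delta)$ directly. I would present the two-sided version, since it is presumably chosen for symmetry and later combination. There is no real obstacle here; the only points needing care are the range justification in the first step and the observation that the frozen-oracle setting makes $\Delta_{\mathrm{loss}}$ sample-independent.
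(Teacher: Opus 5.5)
Your proposal is correct and follows the paper's proof, which simply applies Hoeffding's inequality to the i.i.d.\ variables $\Delta_{\mathrm{loss}}(z_i)\in[0,B]$ computed from the frozen predictors $M$ and $S\circ M$. You also fill in two details the paper leaves implicit: why the range is $[0,B]$, via $\Delta\le B$, which holds for smoothed BPD since $B-\Delta=-\log_2(1-\alpha+\alpha/V)\ge 0$; and why $\ln(2/\delta)$ appears, either from the two-sided bound or because $\ln(1/\delta)\le\ln(2/\delta)$.
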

\begin{proof}
The variables $\Delta_{\mathrm{loss}}(z_i)\in[0,B]$ are i.i.d.\ and Hoeffding applies.
\end{proof}
\subsection{Main theorems}
\label{sec:main}

\begin{theorem}[Generalization bound via compression (Occam) under a concept pool]
\label{thm:occam_pool}
Assume Assumption~\ref{ass:bounded_loss} and Assumption~\ref{ass:pool}.
Let $\delta\in(0,1)$ and define $\delta_1=\delta_2=\delta/2$.
Then with probability at least $1-\delta$ over $S\sim\mathcal{D}^N$,
\begin{equation}
R(M)
\le
\widehat{R}_S(h_{G^\star})
+
\widehat{\epsilon}_{\mathrm{loss}}
+
\eta B
+
B\sqrt{\frac{\ln|\mathcal{H}_P|+\ln(2/\delta)}{2N}}
+
B\sqrt{\frac{\ln(4/\delta)}{2N}}.
\label{eq:occam_final}
\end{equation}
Moreover, using Lemma~\ref{lem:count},
\begin{equation}
R(M)
\le
\widehat{R}_S(h_{G^\star})
+
\widehat{\epsilon}_{\mathrm{loss}}
+
\eta B
+
B\sqrt{\frac{P\ln(em/P)+\ln(2/\delta)}{2N}}
+
B\sqrt{\frac{\ln(4/\delta)}{2N}}.
\label{eq:occam_final_simplified}
\end{equation}
\end{theorem}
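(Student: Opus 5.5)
The plan is to chain the auxiliary lemmas, passing from $M$ to $S\circ M$, then to the pool-restricted proxy $h_{G^\star}$, and finally to the empirical quantities, using a single union bound over two high-probability events. First I will apply Lemma~\ref{lem:decomp} with $f=M$, $g=S\circ M$ to get $R(M)\le R(S\circ M)+\epsilon_{\mathrm{loss}}$. Next I will apply Lemma~\ref{lem:decomp} again, now with $f=S\circ M$ and $g=h_{G^\star}$, to obtain $R(S\circ M)\le R(h_{G^\star})+\epsilon_{\mathrm{pool}}$. Lemma~\ref{lem:pool_mismatch} bounds this by $R(h_{G^\star})+\eta B$. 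This step is deterministic and uses only Assumption~\ref{ass:pool}.

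For the statistical part, I take $\mathcal{H}=\mathcal{H}_P$ in Lemma~\ref{lem:occam} at confidence $\delta_1=\delta/2$. On that event, every $h_G$ with $|G|=P$ satisfies $R(h_G)\le\widehat R_S(h_G)+B\sqrt{(\ln|\mathcal{H}_P|+\ln(2/\delta))/(2N)}$. Since $|G^\star|=P$, we have $h_{G^\star}\in\mathcal{H}_P$, so this applies to it in particular. This is the key point: $G^\star$ is unknown and possibly chosen after seeing the data, but the bound is uniform over the class. Separately, I invoke Lemma~\ref{lem:epsloss_hoeffding} at confidence $\delta_2=\delta/2$, which gives $\epsilon_{\mathrm{loss}}\le\widehat\epsilon_{\mathrm{loss}}+B\sqrt{\ln(4/\delta)/(2N)}$. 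A union bound shows both events hold simultaneously with probability at least $1-\delta$. Substituting into the deterministic chain gives \eqref{eq:occam_final}, and substituting Lemma~\ref{lem:count} into the square root gives \eqref{eq:occam_final_simplified}, because the square root is monotone in its argument.

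The main obstacle is checking that the lemmas' hypotheses actually hold rather than the algebra. Lemma~\ref{lem:occam} requires losses in $[0,B]$. Assumption~\ref{ass:bounded_loss} gives $[B-\Delta,B]\subseteq[0,B]$ only when $\Delta\le B$. With $\Delta=\log_2(1+(1-\alpha)V/\alpha)\le\log_2(V/\alpha)=B$ this holds, and I would state it explicitly.

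Similarly, Lemma~\ref{lem:epsloss_hoeffding} needs $\Delta_{\mathrm{loss}}\in[0,B]$, which follows from $\Delta_{\mathrm{loss}}\in[0,\Delta]$. Strictly, its $\ln(2/\delta)$ suggests a two-sided Hoeffding bound, whereas only the one-sided version is needed; the one-sided bound is tighter, so the stated form remains valid.

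The pool-mismatch step bounds differences by $B$, and this is legitimate for the same reason: any two losses lie in an interval of length $\Delta\le B$. I would also note that $h_{G^\star}$ must be a predictor covered by Assumption~\ref{ass:bounded_loss}. This holds because smoothing applies to all downstream outputs.
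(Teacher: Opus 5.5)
Your proposal is correct and follows the paper's proof step for step. It uses the same deterministic chain $R(M)\le R(S\circ M)+\epsilon_{\mathrm{loss}}\le R(h_{G^\star})+\eta B+\epsilon_{\mathrm{loss}}$ via Lemmas~\ref{lem:decomp} and~\ref{lem:pool_mismatch}, then the uniform Occam bound (Lemma~\ref{lem:occam}) over $\mathcal{H}_P$ evaluated at $h_{G^\star}$, Hoeffding for $\epsilon_{\mathrm{loss}}$, and a union bound at $\delta/2$ each. Swapping the labels $\delta_1,\delta_2$ is immaterial, and your explicit check that $\Delta\le B$ (so Lemmas~\ref{lem:occam} and~\ref{lem:epsloss_hoeffding} apply) only makes precise a hypothesis the paper leaves implicit.
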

\begin{proof}


By Lemma~\ref{lem:decomp}, $R(M)\le R(S\circ M)+\epsilon_{\mathrm{loss}}$.
By Lemma~\ref{lem:pool_mismatch}, $R(S\circ M)\le R(h_{G^\star})+\epsilon_{\mathrm{pool}}\le R(h_{G^\star})+\eta B$.
Apply Lemma~\ref{lem:occam} to $\mathcal{H}_P$ with confidence $\delta_2$ and evaluate at $h_{G^\star}\in\mathcal{H}_P$:
\begin{equation}
R(h_{G^\star}) \le \widehat{R}_S(h_{G^\star}) + B\sqrt{\frac{\ln|\mathcal{H}_P|+\ln(1/\delta_2)}{2N}}.
\end{equation}
Apply Lemma~\ref{lem:epsloss_hoeffding} with confidence $\delta_1$:
\begin{equation}
\epsilon_{\mathrm{loss}} \le \widehat{\epsilon}_{\mathrm{loss}} + B\sqrt{\frac{\ln(2/\delta_1)}{2N}}.
\end{equation}
Combine the inequalities and use a union bound to obtain \eqref{eq:occam_final}.
Finally substitute $\ln|\mathcal{H}_P|\le P\ln(em/P)$ from Lemma~\ref{lem:count} to get \eqref{eq:occam_final_simplified}.
\end{proof}

\subsection{Discussions}

\paragraph{Remark on the Validity of the Finite Hypothesis Class.}
It is crucial to distinguish between training the high-dimensional parameter space of the base model (which corresponds to an effectively infinite hypothesis space) and post-hoc certification of generalization via a finite compressed family of hypotheses. In Theorem~\ref{thm:occam_pool}, we treat the dense predictor $M$ and the sparse autoencoder $\mathcal{S}$ as \emph{fixed} functional realizations (frozen oracles) chosen prior to drawing the evaluation sample $S\sim\mathcal{D}^N$. Under this formulation, the loss gap defining $\varepsilon_{\mathrm{loss}}$ is a bounded measurable random variable on $S$ and hence admits standard concentration (Lemma~\ref{lem:epsloss_hoeffding}) that does not depend on the parameter count $P_\theta$. Moreover, the only remaining search is over feature masks $G\subset[m]$ with $|G|=P$, yielding the finite class $\mathcal{H}_P=\{h_G:|G|=P\}$ with $|\mathcal{H}_P|=\binom{m}{P}$; therefore, the uniform Occam bound (Lemma~3), applied over all $h\in\mathcal{H}_P$, remains valid even though the underlying model $M$ has large capacity (and in particular it continues to hold for a data-chosen mask $\hat G$ (determined on $S$) as a consequence of the uniform guarantee).


\paragraph{Why SAE/SSD rather than Hoeffding on the frozen LLM?}
We acknowledge that if the sole objective is a tight post-hoc risk certificate for a \emph{fixed} model $M$ on an i.i.d.\ test stream, applying Hoeffding's inequality directly to $R(M)$ yields a tighter deviation term than Theorem~\ref{thm:occam_pool}. However, such a bound treats the model as a black box. The purpose of the SAE/SSD framework is to provide a \emph{structural} generalization certificate. By replacing vacuous parameter counts with a representation-level complexity measure (governed by the active concept pool size $P$), we decompose the generalization gap into interpretable components: empirical risk, reconstruction error, and complexity measures. This explanatory approach is also taken by \cite{lotfi2024nonvacuousgeneralizationboundslarge}. Furthermore, unlike a static concentration bound, the SSD quantities (specifically the active empirical per input feature count $k$) serve as dynamic, runtime signals for OOD detection and epistemic uncertainty, offering a safety monitor that is unavailable from a pure loss-based bound.


\paragraph{Why SAE features over native activations?}
We prioritize SAE features over native activations to ensure structural validity rather than numerical tightness. While a similar sparsity bound could theoretically apply to native model's ($M)$ neuronal representations, polysemanticity renders its diagnostics uninformative. Native neurons do not map to stable concepts and often remain densely active even under distribution shifts, failing to signal epistemic uncertainty. In contrast, SAEs enforce a sparse, semantically aligned basis where the assumption of a small active concept pool ($P \ll d$) is physically justified. Crucially, this framework links the theoretical complexity term ($SSD$) directly to the empirical per-sample sparsity ($k$). Under OOD inputs, we observe a "feature explosion" (larger $k$), which increases the correction terms and loosens the certificate precisely when the input deviates from the model's learned distribution. 

We also argue that utilizing $M$'s activation is unsuited for usage as a complexity term in the bound due to a trivial shortcoming. If we define the effective dimension as the activation representation dimension ($P = d_{model}$), the complexity term vanishes $\binom{P}{m} \rightarrow 1$, collapsing the bound to a standard Hoeffding's inequality with standard error terms. While valid, this result is trivial: it confirms that the model generalizes but fails to explain why, offering no insight into the relevance of compression for generlaization. Conversely, if we attempt to restrict the native dimension ($P \ll d_{model}$) to demonstrate compression, the bound may become vacuous. Due to the polysemantic and nature of native neuron, where information is distributed diffusely across the layer (superposition)—removing any subset of neurons may cause a catastrophic rise in reconstruction error ($\epsilon_{loss}$).

The SAE framework resolves this by introducing tunability of $P$ for getting the strictest bound. Because SAE features are monosemantic and sparse, the reconstruction error does not blow up even if we choose $P < m$\footnote{Note that we can choose $P$ on the evaluation set that minimizes the bound, thanks to the union bound.} This capacity to optimize $P$, to tune $P$ without non-vacuity, which native representations cannot offer, is what makes the SAE a prerequisite for deriving non-vacuous, explanatory generalization certificates. The core of the paper is not to prove that training an SAE results in generalization. \emph{The study establishes that if a Sparse Autoencoder (SAE) exhibits a lower Sparse Semantic Dimension (SSD) on new data, it serves as a mathematical certificate that the underlying model generalizes robustly. Essentially, a lower SSD minimizes the complexity term in the risk bound, meaning that models with sparser, more compressible internal representations are theoretically guaranteed to have a tighter generalization gap.}

\subsection{Operational Measurement of Complexity Parameters}

While our theoretical framework relies on the existence of an optimal concept pool $G^*$ and its cardinality $P$, these quantities can be robustly approximated empirically. We provide operational definitions below to facilitate experimental verification.

\paragraph{Measuring the Concept Pool Size $\hat{P}$.}
The theoretical dimension $P = |G^*|$ represents the size of the minimal subset of dictionary atoms required to cover the data distribution with high probability. In practice, we estimate this via the \textit{effective vocabulary size} of the SAE on the training set. Let $\mathcal{D}_{train}$ be the training corpus. Assume the number of examples in $\mathcal{D}_{train}$ as $N_{cal}$. We define the empirical pool $\hat{G}$ as the set of all features that activate at least once (or exceed a frequency threshold $\tau$) across the corpus:
\begin{equation}
    \hat{G} := \bigcup_{x \in \mathcal{D}_{train}} \text{supp}(\text{TopK}(S_E(M(x))))
\end{equation}
The empirical Sparse Semantic Dimension is then estimated as $\widehat{SSD} = \hat{P} \log(em/\hat{P})$, where $\hat{P} = |\hat{G}|$. This quantity serves as a concrete, measurable proxy for the theoretical model complexity. Note that although we take union over all points in the calibration set to find $P$, the theory allows us to choose a $P \ll m$ by plotting bound vs P plot to check the strictest bound and use that corresponding P subsequently while evaluation. Note that we also use empirical $\eta$ as $\hat{\eta}$, and that would require a very small hoeffding's penalty to the bound in Equation \ref{eq:occam_final_simplified}, but we ignore it due to its independence on any active parameter and a very small value.

\paragraph{Constructing the Pool-Restricted Predictor $h_{G^*}$.}
To evaluate the risk of the restricted predictor $h_{G^*}$ (as required by Theorem \ref{thm:occam_pool}), we implement it as a masked intervention on the forward pass. For a fixed pool $G^*$ (or its estimate $\hat{G}$), the predictor $h_{G^*}$ is defined operationally as follows:
i) \textbf{Forward Pass:} Run the base model $M$ on input $x$ up to the target layer to obtain hidden state $h$.
ii) \textbf{Encode:} Compute the pre-activation vector $a = S_E(h) \in \mathbb{R}^m$.
iii) \textbf{Pool Masking:} Apply the pool constraint by zeroing out all indices not in $G^*$:
    \begin{equation}
        a_{G^*} = a \odot \mathbf{1}_{G^*}
    \end{equation}
    where $\mathbf{1}_{G^*}$ is the indicator vector for the pool.
iv) \textbf{Select Features:} Perform the Top-K operation on the masked vector: $c_{G^*} = \text{TopK}(a_{G^*})$.
v) \textbf{Reconstruct \& Resume:} Decode $\hat{h}_{G^*} = S_D(c_{G^*})$ and inject it back into the model, completing the forward pass to obtain logits.

This procedure allows us to directly measure the empirical risk $\hat{R}_S(h_{G^*})$ and verify the pool mismatch error $\epsilon_{pool}$ by comparing it against the unrestricted proxy $S \circ M$.

\section{Experimental Validation}

To empirically validate \textbf{Theorem \ref{thm:occam_pool}}, we conduct two experiments linking the theoretical Sparse Semantic Dimension (SSD) to observe the model behavior. The first experiment verifies that our generalization bound, based on the \textit{global concept pool} $P$, is non-vacuous and tightens with sample size. The second experiment tests the core implication of the Concept Pool Assumption: that violations of the learned feature pool serve as a reliable proxy for epistemic uncertainty under distribution shifts. 
To evaluate the model's behavior under distinct distributional shifts, we utilize three datasets:

\begin{itemize}
    \item \textbf{In-Distribution (ID):} We use the \textbf{C4 dataset (\url{https://huggingface.co/datasets/allenai/c4})}, comprising standard English text, to represent the distribution on which the LLM is trained.
    \item \textbf{Shifted Distribution (Code):} We employ the \textbf{GitHubCode dataset} (\url{https://huggingface.co/datasets/codeparrot/github-code}) by CodeParrot to represent a structured but distributionally shifted domain. While these samples possess syntactic structure, they exhibit lower likelihood under the LLM model compared to natural language.
    \item \textbf{Far Out-Of-Distribution (Far-OOD):} We generate sequences of \textbf{random tokens} to simulate a noisy input (out-of-distribution samples) regime. This serves as a baseline to measure model behavior when the input lacks all semantic structure.
\end{itemize}

\begin{figure}
    \centering
    \includegraphics[width=0.85\linewidth]{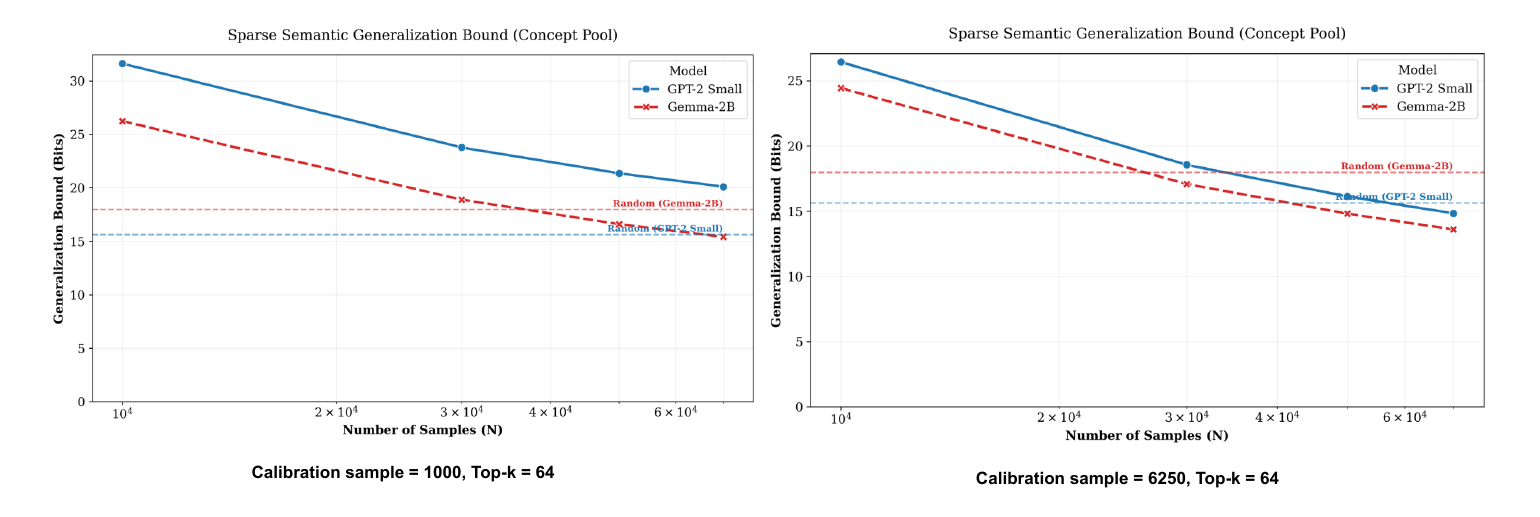}
  \caption{\textbf{Sensitivity of Generalization Bounds to Concept Pool Calibration.} We plot the theoretical generalization certificate (in bits) against evaluation sample size $N$ for GPT-2 Small (blue) and Gemma-2B (red). The left panel utilizes a minimal calibration set ($N_{cal}=1000$), while the right panel utilizes a moderate calibration set ($N_{cal}=6250$).}
    \label{fig:convergence}
\end{figure}

\subsection{Non-Vacuous Generalization Bounds}

\textbf{Motivation.} Standard generalization bounds (e.g., VC-dimension, Rademacher complexity of weights) are vacuous for deep learning, often predicting error rates $>100\%$ due to the massive parameter count $P_\theta$. Our framework shifts the complexity measure to the \textit{active concept pool size} $P < m$, representing the effective vocabulary of features used by the model. We aim to demonstrate that the resulting bound (Eq. \ref{eq:occam_final_simplified}) is non-vacuous (i.e., tighter than a random guess) for real-world LLMs.

\textbf{Setup.} We evaluate two models: \texttt{GPT-2 Small} \citep{radford2019language} (124M parameters) and \texttt{Gemma-2B} \citep{gemmateam2024gemma2improvingopen} (2B parameters). For each model, we use a pre-trained Sparse Autoencoder (SAE) to decompose activations (Layer 6 for GPT-2, Layer 12 for Gemma). Sequence length is fized at 32 tokens. To rigorously test the validity of the certificate and its sensitivity to the learned manifold, we evaluate the bound under two distinct calibration regimes with fixed sparsity ($Top\text{-}k=64$):

i) \textbf{Low-Data Calibration ($N_{cal}=1,000$):} A "few-shot" setting where the active concept pool $G^*$ is estimated using only $\approx 1000$ sequences (Left Panel of Figure \ref{fig:convergence}). This tests how quickly the model's semantic manifold is revealed.
ii) \textbf{Moderate-Data Calibration ($N_{cal}=6,250$):} A more robust setting where the pool is estimated using $6250$ sequences (Right Panel of Figure \ref{fig:convergence}), testing the bound's stability as the dictionary coverage improves.

\textbf{Results and Analysis.} Figure \ref{fig:convergence} illustrates the convergence of our bound.

i) \textbf{Non-Vacuousness Verification:} The generalization bound drops below the random baseline ($15.6$ bits for GPT-2, $18.0$ bits for Gemma-2B) at realistic sample sizes, provided sufficient calibration. In the moderate calibration regime (Right Panel), both models achieve non-vacuousness: \texttt{Gemma-2B} crosses the baseline at $N \approx 2.8 \times 10^4$, and \texttt{GPT-2 Small} crosses at $N \approx 5.5 \times 10^4$. This confirms that the effective dimension of the feature space ($m \approx 10^4$) provides a mathematically valid certificate of complexity where parameter counting ($P_\theta \approx 10^9$) leads to vacuous bounds. Also note how Gemma-2 even being a larger model generalized better than smaller model of GPT-2.

ii) \textbf{Sensitivity to Calibration (Manifold "Sharpness"):} A striking divergence appears in the low-data regime (Left Panel, $N_{cal}=1,000$).  \textbf{Gemma-2B} remains robust, achieving non-vacuousness ($N \approx 3.2 \times 10^4$) even when the concept pool is learned from minimal data.  \textbf{GPT-2 Small}, conversely, fails to certify in this regime. Its bound remains vacuous (hovering $\approx 20$ bits vs. baseline $15.6$) due to a high pool mismatch rate ($\eta$). It requires the larger calibration set (Right Panel) to stabilize.

iii) \textbf{Efficiency of Larger Models:} This result reinforces a counter-intuitive scaling property. Despite being an order of magnitude larger ($2B$ vs $124M$ parameters), \texttt{Gemma-2B} is structurally \textit{simpler} to certify. It requires fewer calibration samples to define its active concept pool and achieves non-vacuousness with fewer evaluation samples ($2.8 \times 10^4$ vs $5.5 \times 10^4$). We attribute this to the superior compressibility of larger models: they utilize a more consistent, structured dictionary that is efficiently approximated by the SAE, whereas smaller models exhibit "noisier" feature activations that require more data to cover.

\begin{table}[t]
    \centering
    
    \resizebox{\textwidth}{!}{%
    \begin{tabular}{lcccccccc}
        \toprule
        \textbf{Model} & \textbf{Calibration} & \textbf{Dict. Size} & \textbf{Active Pool} & \textbf{Empirical Risk} & \textbf{Rec. Gap} & \textbf{Mismatch} & \textbf{Bound} & \textbf{Status} \\
        & ($N_{cal}$) & ($m$) & ($P$) & ($\hat{R}_S$) & ($\hat{\epsilon}_{loss}$) & ($\eta$) & (Bits) & (vs. Base) \\
        \midrule
        \textbf{GPT-2 Small} & 200k & 24,576 & 24,399 & 7.37 & 0.22 & 0.012 & \textbf{14.84} & Non-Vacuous \\
        (Base: 15.62) & 32k & 24,576 & 24,121 & 7.37 & 0.22 & 0.329 & 20.11 & Vacuous \\
        \midrule
        \textbf{Gemma-2B} & 200k & 16,384 & 16,001 & 6.43 & 0.50 & 0.004 & \textbf{13.60} & Non-Vacuous \\
        (Base: 17.97) & 32k & 16,384 & 15,852 & 6.44 & 0.50 & 0.100 & \textbf{15.42} & Non-Vacuous \\
        \bottomrule
    \end{tabular}%
    }
    \caption{\textbf{Sensitivity of Generalization Bounds to Calibration Size ($N_{cal}$).} We report the generalization bound (in bits) for GPT-2 Small and Gemma-2B under two calibration regimes ($N_{cal} = 200k$ vs. $32k$) with a fixed evaluation sample size of $N=70,000$. The baseline represents the random guess log-loss ($\log_2 V$). $\eta$ denotes the pool mismatch rate, and $P$ represents the effective concept pool size.}
    \label{tab:calibration_sensitivity}
\end{table}

\textbf{Analysis of Calibration Sensitivity.} Table \ref{tab:calibration_sensitivity} demonstrates the critical dependency of the generalization certificate on the size of the calibration set ($N_{cal}$). We observe that increasing the number of calibration tokens leads to the discovery of a marginally larger active pool $P$. This indicates that the dataset contains a vast array of distinct semantic concepts; owing to the monosemanticity of SAE features, we can assume these additional active features represent distinct, albeit rare, directions in the activation manifold.

However, the most significant impact of reducing calibration size is observed in the pool mismatch rate, $\eta$. As $N_{cal}$ decreases from $200k$ to $32k$, $\eta$ increases for both models, introducing a larger penalty term to the structural risk minimization bound.

A striking dichotomy emerges when comparing the two architectures in the low-data regime ($N_{cal}=32k$):

i) \textbf{GPT-2 Small:} The model exhibits a high sensitivity to calibration size. At $32k$ tokens, the mismatch rate spikes to $\eta \approx 0.329$, causing the total generalization bound (20.11 bits) to exceed the random baseline (15.62 bits). Consequently, the certificate becomes \textit{vacuous} due to the extreme value of $\eta$, implying that a small calibration set fails to cover the semantic manifold of GPT-2 efficiently.
    
ii) \textbf{Gemma-2B:} Conversely, despite being an order of magnitude larger in parameter count ($2B$ vs $124M$), Gemma-2B maintains a \textit{non-vacuous} generalization bound (15.42 bits $<$ 17.97 bits) even with only 32k calibration tokens (approx. 1,000 sequences). The mismatch rate remains remarkably low ($\eta=0.100$), suggesting that Gemma's internal representations are organized into a more concise, compressible dictionary.

This result reinforces the hypothesis that "compression leads to generalization" \citep{arora2018strongergeneralizationboundsdeep}. Gemma-2B's ability to achieve a valid generalization certificate with a sparser concept pool compared to GPT-2, discovered using minimal calibration data—indicates that larger, more capable models do not necessarily require more complex descriptions. Instead, they appear to learn a "sharper" semantic dictionary that covers the data distribution more efficiently than smaller models.

\subsection{Analysis of Bound Components under IID vs Noise regime}

\textbf{Motivation.} A robust generalization bound must differentiate between learnable structure (such as natural language) and unlearnable noise. To validate this, we decompose the total generalization bound (Theorem 3) into its constituent terms: Empirical Risk ($\hat{R}_S$), Reconstruction Gap ($\hat{\epsilon}_{loss}$), Pool Mismatch ($\eta B$), and the Complexity Penalty ($\Omega \propto \sqrt{SSD}$). We compare these terms on In-Distribution data (English) versus Far-OOD Random Noise (random tokens).

\textbf{Experimental Setup.} We evaluate the decomposition on \texttt{GPT-2 Small} and \texttt{Gemma-2B} using the strict held-out separation detailed in Table \ref{tab:experiment_results} (Panel A). For this analysis, we utilize a standard sparse inference setting ($Top\text{-}k=64$) to observe how the active concept pool behaves under distribution shifts.

\begin{table}[ht]
    \centering
    
    \begin{tabular}{lcc}
        \multicolumn{3}{c}{\textbf{Panel A: Experimental Hyperparameters for all experiments}} \\
        \toprule
        \textbf{Parameter} & \textbf{Value} & \textbf{Description} \\
        \midrule
        Calibration seqs. ($N_{cal}$) & 6250 & Samples to learn the concept pool $G^*$ \\
        Test seqs. ($N_{test}$) & 70,000 & Samples to measure Risk \& Gap \\
        Context Length ($T$) & 32 & Sequence length for evaluation \\
        Smoothing ($\alpha$) & 0.5 & Probability smoothing factor \\
        Confidence ($\delta$) & 0.05 & Bounds holds with prob $1-\delta$ \\
        Batch Size & 16 & batch size for models \\
        \bottomrule
    \end{tabular}
    \vspace{0.3cm}
    
    \begin{tabular}{l|ccccc|c}
        \multicolumn{7}{c}{\textbf{Panel B: Quantitative Decomposition (Bits per Dimension)}} \\
        \toprule
        \textbf{Model / Condition} & \textbf{Risk} ($\hat{R}_S$) & \textbf{Gap} ($\hat{\epsilon}$) & \textbf{Mismatch} ($\eta B$) & \textbf{Comp.} ($\Omega$) & \textbf{Total} & \textbf{P} \\
        \midrule
        \textit{Gemma-2B ($k=64$)} & & & & & & \\
        English (IID) & 6.39 & 0.50 & 0.12 & 6.49 & \textbf{13.60} & 
        15985 \\

        Noise (Far-OOD) & 18.78 & 0.14 & 1.67 & 6.48 & \textbf{27.17} & 15114 \\
        \midrule
        \textit{GPT-2 Small ($k=64$)} & & & & & & \\
        English (IID) & 7.35 & 0.22 & 0.23 & 6.96 & \textbf{14.86} & 24390 \\
        Noise (Far-OOD) & 16.19 & 0.06 & 1.93 & 6.95 & \textbf{25.23} & 22901 \\
        \bottomrule
    \end{tabular}
    \caption{\textbf{Experimental Analysis of Bound Components.} \textbf{Panel A} details the hyperparameters used for the decomposition. \textbf{Panel B} presents the quantitative breakdown of the bound (in bits) for Gemma-2B and GPT-2, showing that OOD rejection is driven by maximal empirical risk and a spike in pool mismatch.}
    \label{tab:experiment_results}
\end{table}

\textbf{Analysis.} As illustrated in Figure \ref{fig:bound_decomposition} and quantified in Table \ref{tab:experiment_results}, we observe two distinct regimes of generalization:

1. \textbf{The Generalization Regime (IID):} For natural language, the bound certifies generalization through a cooperative balance. Both models achieve low empirical risk ($\hat{R}_S\ \approx 6.39$ bits for Gemma, $\approx 7.35$ bits for GPT-2)\footnote{$\hat{R}_S$ is same as $\hat{R}_S(h_{G^*})$} and a minimal reconstruction gap. Crucially, the \textbf{Pool Mismatch} term ($\eta B$) is negligible ($0.12$ bits for Gemma, $0.23$ bits for GPT-2), indicating that the semantic concepts learned during the calibration phase ($N_{cal}=6250$) successfully cover the features activated by unseen test data. Consequently, the total bound ($13.60$ bits for Gemma, $14.86$ bits for GPT-2) drops below the random baseline ($\log_2 V$), certifying non-vacuous generalization.

2. \textbf{The Noise Regime (Far-OOD):} In the random noise setting, the bound correctly identifies the data as unlearnable, with the total bound exploding to vacuous levels ($\approx 25-27$ bits). This rejection is driven by two key factors:i) \textbf{Irreducible Empirical Risk:} The risk maximizes to the vocabulary entropy ($\approx 18.78$ bits for Gemma, $\approx 16.19$ bits for GPT-2), dominating the bound. ii) \textbf{Random sparsity:} We observe a significant relative spike in the Pool Mismatch term. For \texttt{Gemma-2B}, this term increases by over $10\times$ ($0.12 \rightarrow 1.67$ bits), and for \texttt{GPT-2}, it rises similarly ($0.23 \rightarrow 1.93$ bits). This confirms that while noise inputs activate features, they trigger a "hallucination" of concepts outside the learned semantic pool $G^*$, effectively violating the sparse dictionary assumption.

3. \textbf{Reconstruction Collapse:} We note that the Reconstruction Gap ($\hat{\epsilon}$) paradoxically drops to near zero for noise ($\approx 0.14$ bits for Gemma); this is not a sign of quality, but rather indicates that both the dense model ($M$) and the sparse proxy ($S \circ M$) have collapsed to uniform random guessing, resulting in zero disagreement between their predictions.

\begin{figure}[t]
    \centering
    \includegraphics[width=\linewidth]{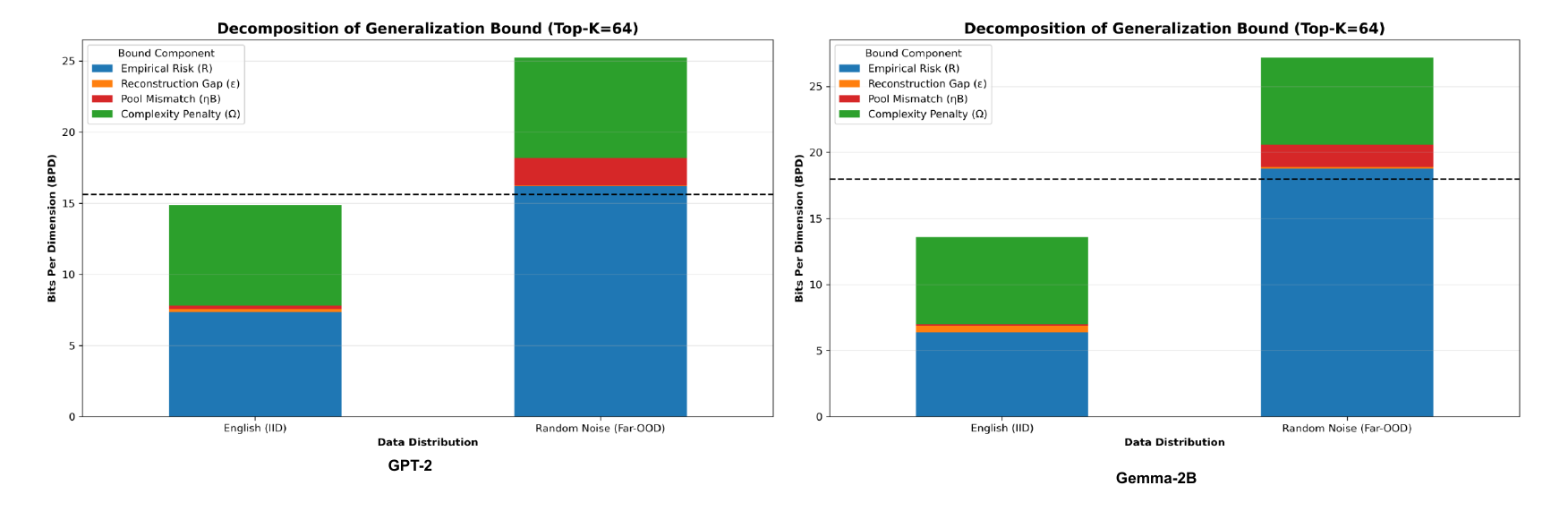}
    \caption{\textbf{Decomposition of the Generalization Bound Components.} We visualize the contribution of Risk ($R$), Gap ($\epsilon$), Mismatch ($\eta B$), and Complexity ($\Omega$) to the total bound. The plots compare In-Distribution (English) vs. Far-OOD (Random Noise) under the constraint Top-k=64. The bound certifies generalization for English via low risk and low pool mismatch. Conversely, it rejects Noise due to maximal empirical risk and a massive spike in pool mismatch (indicated in red).}
    \label{fig:bound_decomposition}
\end{figure}

\subsection{Runtime Uncertainty Quantification via Feature Density}

\textbf{Motivation.} Although Theorem \ref{thm:occam_pool} establishes that generalization is governed by the global concept pool size $P$, measuring $P$ requires a full data set pass. For real-time data, we require a local proxy. We hypothesize that the \textit{instantaneous active feature count} $k(x) = \|S_E(x)\|_0$ serves as this proxy. A spike in $k$ (feature explosion) indicates that the input $x$ cannot be sparsely represented by the learned dictionary, signaling a local violation of the sparsity assumption and, consequently, we hypothesize that this leads to high epistemic uncertainty. 

\textbf{Setup.} We probed \texttt{GPT-2 Small} and \texttt{Gemma-2B} in three regimes representing samples from three different distributions:
i) \textbf{In-Distribution (ID):} Standard English text (C4).
ii) \textbf{Shifted Structure:} Python Code (CodeParrot). A structured domain distinct from the training distribution (for GPT-2).
iii) \textbf{Maximum Entropy:} Random Tokens. Representing maximal epistemic uncertainty. We use 2240000 tokens (i.e. 70000 examples) from all datasets to perform the experiments.

\textbf{Results.}
As shown in Figure \ref{fig:ood_shift} and Table \ref{tab:ood_stats}, runtime sparsity $k$ acts as a highly sensitive distribution detector, although its behavior varies by model scale.

\begin{figure*}[t]
    \centering
    \includegraphics[width=0.85\textwidth]{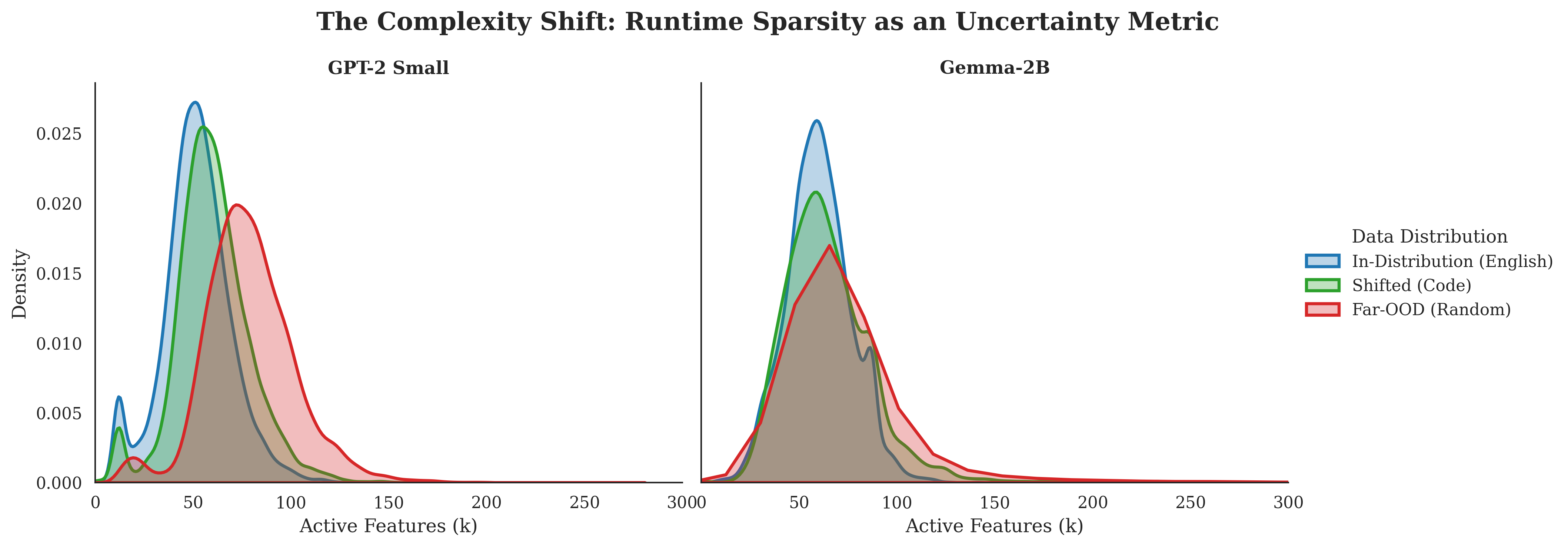}
    \caption{\textbf{The Complexity Shift.} Histograms of active feature counts ($k$) for GPT-2 and Gemma-2B. GPT-2 shows a clear distributional shift to the right (higher complexity) for noise. Gemma-2B reveals a "heavy-tailed" failure mode on Far-OOD data (Max $k > 1500$) and a "compression" mode on Code (Shift Left), accurately reflecting its training distribution alignment.}
    \label{fig:ood_shift}
\end{figure*}


\begin{table}[ht]
\centering
\begin{tabular}{l|ccc|ccc}
\toprule
 & \multicolumn{3}{c|}{\textbf{GPT-2 Small}} & \multicolumn{3}{c}{\textbf{Gemma-2B}} \\
\textbf{Condition} & Mean $k$ & Std & Max $k$ & Mean $k$ & Std & Max $k$ \\ \midrule
In-Distribution (English) & 52.02 & 17.69 & 165 & 60.31 & 16.46 & 237 \\
Shifted (Code) & 59.98 & 19.31 & 203 & 64.32 & 22.59 & 382 \\
Far-OOD (Random) & \textbf{78.89} & 23.58 & 419 & 78.33 & 93.69 & \textbf{3840} \\ \bottomrule
\end{tabular}
\caption{Sparsity statistics across distributions. Note the massive max-k spike for Gemma on OOD data.}
\label{tab:ood_stats}
\end{table}

1. \textbf{ID data:} For \texttt{GPT-2}, OOD noise triggers a "Feature Explosion," shifting mean sparsity from $52.02 \to 78.89$. The same happens for Gemma-2B as well. The model attempts to approximate unstructured noise by "superposing" unrelated semantic dictionary features. This confirms that high $k$ serves as a direct proxy for confusion—when the model is input a OOD input, it activates a dense combination of features.
    
2. \textbf{Shifted ID data:} Like GPT-2, Gemma also exhibits \textit{lower} sparsity on code ($k=64.32$) than on English ($k=60.31$). For both models, mean $k$ value is higher for code data compared to in-distribution English data.
    
3. \textbf{Catastrophic Feature Collapse:} On Far-OOD data, \texttt{Gemma-2B} exhibits a large tail behavior. While the mean shift is moderate, the maximum $k$ explodes to \textbf{3840} (a $12\times$ spike vs. ID). This signals a catastrophic failure of the sparse representation. We may propose rule such as a simple runtime guardrail (e.g., $k > 500$) could detect 100\% of these sparsity violations with near-zero false positives on natural text.

\subsection{Does specific features matter for generalization?}

\textbf{Hypothesis.} Even with a learned dictionary, does the \textit{specific choice} of features matter? We test this by breaking the semantic link between feature indices and their meanings. If the bound relies only on the statistical properties of the activation vector (e.g., norms and sparsity count), shuffling the features should have no effect.

\textbf{Methodology.} We implemented a "Shuffled Feature" intervention: for every token, we take the exact activation vector $c(x)$ produced by the SAE (preserving exact $L_0$ sparsity and $L_2$ norm) but permute the indices randomly before decoding. We then measure the distribution of the reconstruction gap ($\epsilon_{loss}$) for both the real and shuffled conditions.

\begin{figure}[ht]
    \centering
    \includegraphics[width=1.0\textwidth]{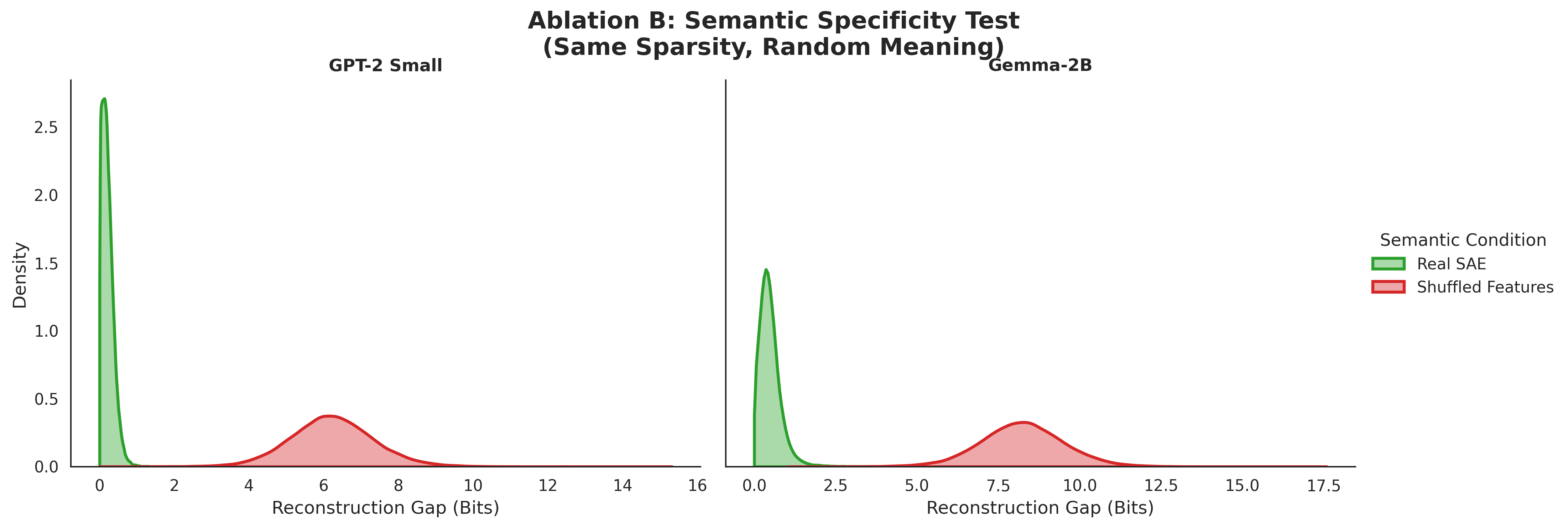}
    \caption{\textbf{Ablation: Semantic Specificity.} Histograms of the per-sequence reconstruction gap ($\epsilon_{loss}$). \textbf{Green (Real SAE):} The error is tightly clustered near 0 bits, indicating high semantic fidelity. \textbf{Red (Shuffled):} Permuting the feature indices—while maintaining identical per-sample sparsity $k$—causes the error distribution to shift distinctively to the right, adopting a \textbf{Gaussian-like profile} (Mean shift: $\approx 6.5$ bits for GPT-2, $\approx 8.5$ bits for Gemma). This proves that the generalization guarantee depends on the specific \textit{semantic alignment} of the activated features, not just statistical sparsity.}
    \label{fig:ablation_shuffled}
\end{figure}

\textbf{Results.} Figure \ref{fig:ablation_shuffled} illustrates a catastrophic failure of the proxy predictor under shuffling.

\begin{enumerate}
    \item \textbf{Error Explosion:} For the Real SAE (Green), the reconstruction gap is minimal and tightly clustered near 0 bits, indicating high fidelity. In contrast, for the Shuffled condition (Red), the error distribution explodes and \textbf{mimics a Gaussian curve}, shifting significantly to the right. The mean error rises to $\approx 6.5$ bits for GPT-2 and $\approx 8.5$ bits for Gemma-2B, representing the loss of semantic information.
    
    \item \textbf{Theoretical Implication:} Since the complexity term ($SSD$) remains identical for both conditions (as sparsity $k$ is preserved), the difference in the bound is driven entirely by the reconstruction penalty $\epsilon_{loss}$. This confirms that our theorem is sensitive to \textit{semantics}: it credits the model only when it finds a sparse representation that is strictly aligned with the data distribution.
\end{enumerate}

\section{Conclusion and Future Work}

In this work, we addressed the Generalization Paradox of Large Language Models (LLMs), resolving the discrepancy between their massive parameter counts ($P_{\theta} \gg N$) and their robust ability to generalize by proposing that the effective complexity is defined by the \textit{Sparse Semantic Dimension} (SSD) of the activation manifold. We derived a high-probability risk bound that scales with the size of the active feature pool $P$ rather than the model parameter scale.
Empirically, we validated this bound on GPT-2 Small and Gemma-2B, demonstrating that it becomes non-vacuous at realistic sample sizes ($N \approx 25,000$), a regime where parameter-counting bounds would fail and establishing that oer-sample sparsity of the learned dictionary features of the SAE (termed as $k$) reliably signal epistemic uncertainty.

Our findings may open several avenues for future research at the intersection of mechanistic interpretability and learning theory, particularly regarding the scaling laws of sparse semantics. While we observed that larger models like Gemma-2B exhibit generalization n realistic data regime, it remains to be seen how the ratio of SSD to parameter count evolves for frontier-scale models (e.g., 70B+ parameters) and whether a ``sparsity saturation'' point exists. Future work should also investigate layer-wise generalization dynamics (by using SAE at different layers) to determine which layers govern memorization versus generalization. Finally, extending our theoretical bounds to account for dynamic or context-dependent concept pools could provide tighter guarantees for models operating in rapidly shifting domains, generalizing the fixed-pool assumption used in this initial framework.

\bibliography{main}
\bibliographystyle{tmlr}

\end{document}